\relax
\documentclass[letterpaper]{article} 
\usepackage{aaai19}  
\usepackage{times}  
\usepackage{helvet}  
\usepackage{courier}  
\usepackage{url}  
\usepackage{graphicx}  
\usepackage{amsmath}
\usepackage{amsfonts}
\usepackage{amsthm}
\usepackage{tikz}
\usepackage{pgfplots}
\usepackage{booktabs}
\usepackage{multirow}
\usepackage{pbox}
\usepackage{threeparttable}

\usepgfplotslibrary{fillbetween}

\newtheorem{proposition}{Proposition}

\frenchspacing  
\setlength{\pdfpagewidth}{8.5in}  
\setlength{\pdfpageheight}{11in}  
  \pdfinfo{
/Title (On the Calibration of Nested Dichotomies for Large Multiclass Tasks)
/Author (Tim Leathart, Eibe Frank, Bernhard Pfahringer, Geoffrey Holmes)}
\setcounter{secnumdepth}{0}  
 \begin{document}
%
\title{On the Calibration of Nested Dichotomies for Large Multiclass Tasks}
\author{Tim Leathart, Eibe Frank, Bernhard Pfahringer and Geoffrey Holmes\\
Department of Computer Science, University of Waikato, New Zealand
}

\maketitle
\begin{abstract}
	Nested dichotomies are used as a method of transforming a multiclass classification problem into a series of binary problems. A tree structure is induced that recursively splits the set of classes into subsets, and a binary classification model learns to discriminate between the two subsets of classes at each node. In this paper, we demonstrate that these nested dichotomies typically exhibit poor probability calibration, even when the base binary models are well calibrated. We also show that this problem is exacerbated when the binary models are poorly calibrated. We discuss the effectiveness of different calibration strategies and show that accuracy and log-loss can be significantly improved by calibrating both the internal base models and the full nested dichotomy structure, especially when the number of classes is high.
\end{abstract}

\section{Introduction}
As the amount of data collected online continues to grow, modern datasets utilised in machine learning are increasing in size. Not only do these datasets exhibit a large number of examples and features, but many also have a very high number of classes. It is not uncommon in some application areas to see datasets containing tens of thousands~\cite{deng2009imagenet}, or even millions of classes~\cite{dekel2010multiclass,agrawal2013multi}. 

An attractive option to handle datasets with such large label spaces is to induce a binary tree structure over the label space. At each split node $k$, the set of classes present, $\mathcal{C}_k$, is split into two disjoint subsets $\mathcal{C}_{k1}$ and $\mathcal{C}_{k2}$. Then, a binary classification model is trained to distinguish between these two subsets of classes. Many algorithms have been proposed that fit this general description, for example~\cite{beygelzimer2009error,bengio2010label,choromanska2015logarithmic,daume17logarithmic}. Usually, in these tree structures, a greedy inference approach is taken, i.e., test examples only take a single path from the root node to leaf nodes. This has the inherent drawback that a single mistake along the path to a leaf node results in an incorrect prediction. 

In this paper, we consider methods with probabilistic classifiers at the internal nodes, called \textit{nested dichotomies} in the literature~\cite{frank2004ensembles}. Utilising probabilistic binary classifiers at the internal nodes to make routing decisions for test examples has several advantages over simply taking a hard 0/1 classification. For example, multiclass class probability estimates can be computed in a natural way by taking the product of binary probability estimates on the path from the root to the leaf node~\cite{fox1997applied}. However, although hard classification decisions are avoided, small errors in the binary probability estimates can accumulate over this product, resulting in overall potentially poorer quality predictions. Datasets with more classes result in deeper trees, exacerbating this issue. 

In this paper, we investigate approaches to reduce the impact of the accumulation of errors by utilising \textit{probability calibration} techniques. Probability calibration is the task of transforming the probabilities output by a model to reflect their true empirical distribution; for the group of test examples that are predicted to belong to some class with probability $0.8$, we expect about $80\%$ of them to actually belong to that class if our model is well calibrated. Our main hypothesis is that the overall predictive performance of nested dichotomies can be improved by calibrating the individual binary models at internal nodes (referred to in this paper as \textit{internal calibration}). However, we also observe that significant performance gains can be achieved by calibrating the entire nested dichotomy (referred to as \textit{external calibration}), even if the internal models are well calibrated.

This paper is structured as follows. First, we briefly review nested dichotomies and probability calibration. We then discuss internal and external calibration, providing theoretical motivation and showing experimental results for each method. Finally, we conclude and discuss future research directions.

\section{Nested Dichotomies}

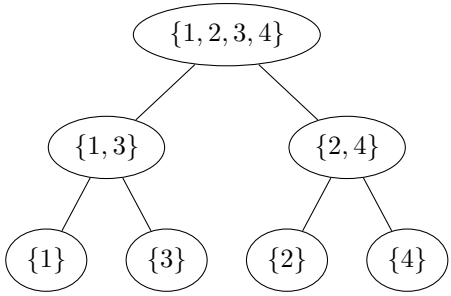
\begin{figure}[t]
	\centering
	\begin{tikzpicture}
		\tikzstyle{level 1}=[sibling distance = 32mm]
		\tikzstyle{level 2}=[sibling distance = 16mm]
		\usetikzlibrary{shapes}
		\node[ellipse,draw](z){$\{1,2,3,4\}$}
		  child
		  {
		  	node[ellipse,draw]{$\{1,3\}$}
		  	child
		  	{
		  		node[ellipse,draw]{$\{1\}$}
		  	}
		  	child
		  	{
		  		node[ellipse,draw]{$\{3\}$}
		  	}
		  }
		  child
		  {
	    		node[ellipse,draw]{$\{2,4\}$} 
	    		child
	    		{
	    			node[ellipse,draw] {$\{2\}$}
	    		} 
	    		child
	    		{
	    			node[ellipse,draw] {$\{4\}$}
	    		}
	    	  };
	\end{tikzpicture}
	
	\caption{\label{fig:nd_example} An example of nested a dichotomy for a four class problem.}
\end{figure}

Nested dichotomies are used as a binary decomposition method for multiclass problems~\cite{frank2004ensembles}. In this paper, we only consider the case where a single nested dichotomy structure is built, although generally superior performance can be achieved by training an ensemble of nested dichotomies with different structures~\cite{rodriguez2010forests}. Ensembles of nested dichotomies have been shown to outperform binary decomposition methods like one-vs-all~\cite{rifkin2004defense}, one-vs-one~\cite{friedman1996another} and error-correcting output codes~\cite{dietterich1995solving}, on some classification problems~\cite{frank2004ensembles}. 

The structure of a nested dichotomy can have a large impact on the predictive performance, training time and prediction time. To this end, several methods have been proposed for deciding the structure of nested dichotomies~\cite{dong2005ensembles,leathart2016building,melnikov2018effectiveness,wever2018ensembles,leathart2018ensembles}. In this paper, we focus on a simple method that randomly splits the set of classes into two at each internal node.

As previously stated, a useful feature of nested dichotomies is the ability to produce multiclass probability estimates $\mathbf{\hat{p}}_i$ for a test instance $(\mathbf{x}_i, y_i)$ from the product of binary estimates on the path $\mathcal{P}_c$ to the leaf node corresponding to class $c$: 
\begin{align*}
		\mathbf{\hat{p}}^{(c)}_i &= p(y_i = c | \mathbf{x}_i) \\ 
	\begin{split}
			&= \prod_{k \in \mathcal{P}_c} \big( \mathbb{I}(c \in \mathcal{C}_{k1}) p(c \in \mathcal{C}_{k1}|\mathbf{x}_i, y_i \in \mathcal{C}_{k}) ~ + \\
			& \qquad \quad ~~\mathbb{I}(c \in \mathcal{C}_{k2}) p(c \in \mathcal{C}_{k2}|\mathbf{x}_i, y_i \in \mathcal{C}_{k}) \big).
	\end{split}
\end{align*}
where $\mathbb{I}(\cdot)$ is the indicator function, $\mathcal{C}_k$ is the set of classes present at node $k$ and $\mathcal{C}_{k1}, \mathcal{C}_{k2} \subset \mathcal{C}_k$ are the sets of classes present at the left and right child of node $k$, respectively. If desired, one can still perform greedy inference by taking the most promising branch at each split point, but this is not guaranteed to find the best solution~\cite{beygelzimer2009error}. Having binary class probability estimates facilitates efficient tree search techniques~\cite{kumar2013beam,mena2015using,dembczynski2016consistency} for better inference, as well as top-$k$ prediction. 

\section{Probability Calibration}
Probability calibration is the task of transforming the outputs of a classifier to accurate probabilities. It is useful in a range of settings, such as cost-sensitive classification and scenarios where the outputs of a model are used as inputs for another. It is also important in real world decision making systems to know when a prediction from a model is likely to be incorrect. For example, models used in an automated healthcare system should indicate when they are unsure of a prediction so a doctor can take control. 

Some models, like logistic regression, tend to be well calibrated out-of-the-box, while other models like na\"ive Bayes and boosted decision trees usually exhibit poor calibration, despite high classification accuracy~\cite{niculescu2005predicting}. Some other models such as support vector machines cannot output probabilities at all, so calibration can be applied to produce a probability estimate. Calibration is typically applied as a post-processing step---a calibration model is trained to transform the output score from a model into a well calibrated probability. The calibration model should be trained on a held-out dataset that was not used for training of the base model to avoid overfitting.

\subsection{Calibration Methods}
The most commonly used calibration methods in practice are Platt scaling~\cite{platt1999probabilistic} and isotonic regression~\cite{zadrozny2001obtaining}. Both of these methods are only directly applicable to binary problems, but standard multiclass transformation techniques can be used to apply them to multiclass problems~\cite{zadrozny2002transforming}. 

\subsubsection{Platt Scaling} is a technique that fits a sigmoid curve 
\begin{align*}
	\boldsymbol{\sigma}(z_i) = \frac{1}{1 + e^{\alpha z_i + \beta}}
\end{align*}
from the output of a binary classifier $z_i$ to the true labels. The parameters $\alpha$ and $\beta$ are fitted using logistic regression. Platt scaling was originally proposed for scaling the output of SVMs, but has been shown to be an effective calibration technique for a range of models~\cite{niculescu2005predicting}. Usually, Platt scaling is applied to the log-odds (sometimes called logits) of the positive class, rather than the probability.

\subsubsection{Matrix and Vector Scaling}
are simple extensions of Platt scaling for multiclass problems~\cite{guo2017calibration}. In matrix scaling, instead of single parameters $\alpha$ and $\beta$, a matrix $\mathbf{W}$ and bias vector $\mathbf{b}$ is learned:
\begin{align*}
		\boldsymbol{\sigma}(\mathbf{z}_i) = \frac{1}{1 + e^{\mathbf{Wz}_i + \mathbf{b}}}
\end{align*}
where $\mathbf{z}_i$ is the vector of the log-odds of each class for instance $i$. Matrix scaling is equivalent to a standard multiple logistic regression model applied to the log-odds. It is expensive for datasets with many classes, as the weight matrix~$\mathbf{W}$ grows quadratically with the number of classes. Vector scaling is proposed to overcome this. It is a variant where~$\mathbf{W}$ is restricted to be a diagonal matrix to achieve scaling that is linear in the number of classes.

\subsubsection{Isotonic Regression} is a non-parametric technique for probability calibration~\cite{zadrozny2001obtaining}. It fits a piecewise constant function to minimise the mean squared error between the estimated class probabilities and the true labels. Isotonic regression is a more general method than Platt scaling because no assumptions are made about the function used to map classifier outputs, other than that the function is non-decreasing (isotonicity). Isotonic regression has been found to work well as a calibration model, but the flexibility of the fitted function means it can overfit on small samples.

\subsubsection{Other related work.}
In this paper, efficiency is a concern as there are many models to be calibrated. For this reason, we opt for the simpler calibration methods mentioned above in our experiments. However, there are several more expressive (and expensive) calibration methods in the literature. \citeauthor{jiang2011smooth} (\citeyear{jiang2011smooth}) and \citeauthor{zhong2013accurate} (\citeyear{zhong2013accurate}) propose methods for creating a smooth spline from the piecewise constant function produced in isotonic regression. \citeauthor{naeini2015obtaining} (\citeyear{naeini2015obtaining}) propose a method for performing Bayesian averaging over all possible binning schemes---schemes that split the probability space into several bins and calibrated probability value is established per bin. \citeauthor{leathart2017probability} (\citeyear{leathart2017probability}) split the feature space into regions using a decision tree and build a localised calibration model in each region.

\subsection{Measuring Miscalibration}

The level of probability calibration that a model exhibits is commonly measured by the negative log-likelihood (NLL):
\begin{align*}
	\text{NLL} = - \frac{1}{n} \sum_{i=1}^n \mathbf{y}_i \log \mathbf{\hat{p}}_i
\end{align*}
where $n$ is the number of examples, $\mathbf{y}_i$ is the one-hot true label for an instance and $\mathbf{\hat{p}}_i$ is the estimated probability distribution. NLL heavily penalises probability estimates that are far from the true label. For this reason, models which optimise NLL in training tend to be well calibrated, although interestingly it has been shown recently that the kinds of architectures used in modern neural networks can also produce poorly calibrated models~\cite{guo2017calibration}. NLL is also commonly used as a general measure of model accuracy.

\pgfplotscreateplotcyclelist{growthlist}{%
black, solid, every mark/.append style={solid, fill=black}, mark=square*\\%
black, solid, every mark/.append style={solid, fill=black}, mark=*\\%
black, solid, every mark/.append style={solid, fill=black}, mark=triangle*\\%
}

\pgfplotstableread[col sep=tab,header=true]{
	x	y
	0.01	0.125
	0.015	0.31
	0.02	0.49
	0.07	0.62
	0.22	0.72
	0.57	0.74
	0.91	0.87
	0.99	0.93
}\nbcalibration
\pgfplotstableread[col sep=tab,header=true]{
	x	y
	0.03	0.08
	0.04	0.03
	0.05	0.045
	0.08	0.12
	0.10	0.13
	0.13	0.105
	0.235	0.27
	0.40	0.45
	0.625	0.625
	0.74	0.64
	0.77	0.77
	0.81	0.85
	0.86	0.82
	0.90	0.9
}\nbisotoniccalibration
\pgfplotstableread[col sep=tab,header=true]{
	x	y
	0.10	0.09
	0.12	0.06
	0.14	0.09
	0.16	0.10
	0.18	0.16
	0.22	0.06
	0.28	0.23
	0.34	0.44
	0.39	0.64
	0.50	0.65
	0.61	0.77
	0.76	0.8
	0.87	0.81
	0.95	0.87
}\nbplattcalibration

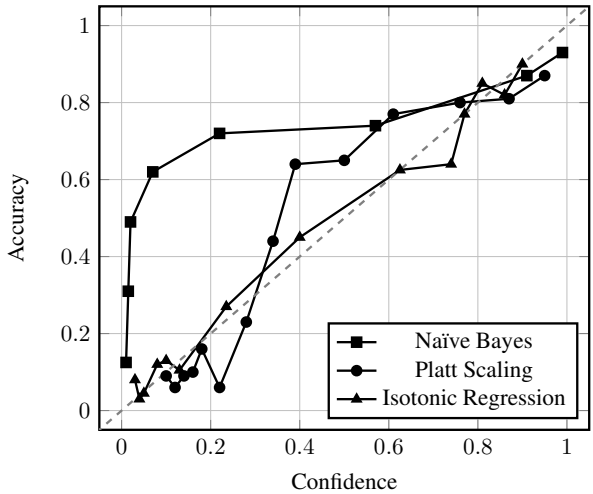
\begin{figure}[t]
	\resizebox{0.45\textwidth}{!}{	
		\begin{tikzpicture}     
	        \begin{axis}[    
			        cycle list name=growthlist,
		            width=9cm,
		            height=8cm,
		            line width=1pt,                               
		            ymin=-0.05,
		            ymax=1.05,
		            ytick={0,0.2,0.4,0.6,0.8,1.0},
		            xtick={0,0.2,0.4,0.6,0.8,1.0},
		            xmin=-0.05,
		            xmax=1.05,
		            grid=major,
		            ylabel={Accuracy},
		            xlabel={Confidence},
		            legend style={at={(0.978, 0.025)}, anchor=south east}
	            ]       
		        \addplot table [y=y, x=x] {\nbcalibration};   
		        \addplot table [y=y, x=x] {\nbplattcalibration}; 
		        \addplot table [y=y, x=x] {\nbisotoniccalibration}; 
		        \addplot [gray, dashed] {x};
		        \legend{Na\"ive Bayes, Platt Scaling, Isotonic Regression}
	        \end{axis}
	    \end{tikzpicture}
	}
	\caption{Reliability diagram for of the \texttt{credit} dataset from the UCI repository. Curves are shown for the predictions made by na\"ive Bayes, as well as calibrated probabilities through Platt scaling and isotonic regression.}
	\label{fig:reliability_example}
\end{figure}

Probability calibration for binary classification tasks can be visualised through reliability diagrams~\cite{degroot1983comparison}. In reliability diagrams, the probability range $[0,1]$ is discretised into $K$ bins $B_1, \dots, B_k$. These bins are chosen such that they have equal width, or equal numbers of examples. The \textit{confidence} of each bin is given as the average prediction of examples that fall inside the bin, while the \textit{accuracy} of each bin is the empirical accuracy:
\begin{align*}
	\text{acc}(B_k) &= \frac{1}{|B_k|} \sum_{i \in B_k} \mathbb{I}(\hat{y}_i = y_i), \\
	\text{conf}(B_k) &= \frac{1}{|B_k|} \sum_{i \in B_k} \hat{p}_i
\end{align*}
where $y_i$ is the true binary label, $\hat{y}_i$ is the predicted label, and $\hat{p}_i$ is the predicted probability for an instance $i$. Intuitively, a well calibrated classifier should have comparable confidence and accuracy for each bin. The confidence and accuracy are plotted against each other for each bin, producing a straight diagonal line for a well calibrated classifier (Fig.~\ref{fig:reliability_example}).

\citeauthor{naeini2015obtaining} (\citeyear{naeini2015obtaining}) applied the same idea to give a direct quantitative measure of calibration, called the \textit{expected calibration error} (ECE):
\begin{align*}
	\text{ECE} = \sum_{k=1}^K \frac{|B_k|}{n}~\Big| \text{acc}(B_k) - \text{conf}(B_k) \Big|.
\end{align*}
This is simply a weighted average of the residuals in a reliability diagram, weighted by the number of instances that fall inside each bin.

\section{Internal Calibration}
In this section, we investigate the effect of calibrating the internal models of nested dichotomies. Our hypothesis is that by improving the quality of the binary probability estimates, the final multiclass predictive performance will be improved. This is due to the fact that multiclass probability estimates are produced by computing the product of a series of binary probability estimates. If the binary probability estimates are not well calibrated, then these errors will accumulate throughout the calculation.

\subsection{Theoretical Motivation}
It seems reasonable that improving the calibration of internal models will result in superior probability estimates for the nested dichotomy, but can we theoretically quantify this improvement? It turns out that reducing the binary NLL of any internal model by some amount $\delta$ strictly reduces the multiclass NLL of the nested dichotomy, and depending on the depth of the internal model being calibrated, the reduction in multiclass NLL can be as high as $\delta$.

\begin{proposition}
	The NLL of an instance under a nested dichotomy is equal to the sum of NLLs of the instance under the binary models on the path from the root node to the leaf node.	
\end{proposition}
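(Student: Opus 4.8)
The plan is to work directly from the product formula for the multiclass probability estimate $\mathbf{\hat{p}}^{(c)}_i$ given earlier, and to exploit the fact that the logarithm converts the product over the path into a sum. Since the true label $\mathbf{y}_i$ is one-hot, the instance-level NLL $-\mathbf{y}_i \log \mathbf{\hat{p}}_i$ reduces to $-\log \mathbf{\hat{p}}^{(c)}_i$, where $c = y_i$ is the true class. So the whole statement amounts to showing that $-\log \mathbf{\hat{p}}^{(c)}_i$ decomposes additively over the nodes $k \in \mathcal{P}_c$.

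First I would substitute the product expression for $\mathbf{\hat{p}}^{(c)}_i$. Along the path $\mathcal{P}_c$, at each node $k$ exactly one of the indicators $\mathbb{I}(c \in \mathcal{C}_{k1})$, $\mathbb{I}(c \in \mathcal{C}_{k2})$ is nonzero, since $\mathcal{C}_{k1}$ and $\mathcal{C}_{k2}$ are disjoint and their union contains $c$. Hence each factor in the product collapses to a single binary probability—call it $q_k$—namely the probability that the binary model at node $k$ assigns to the subset that actually contains $c$. This gives $\mathbf{\hat{p}}^{(c)}_i = \prod_{k \in \mathcal{P}_c} q_k$, and taking logs then yields $-\log \mathbf{\hat{p}}^{(c)}_i = \sum_{k \in \mathcal{P}_c} (-\log q_k)$.

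The final step is to recognise each summand $-\log q_k$ as the binary NLL of the instance under the model at node $k$. At node $k$ the relevant binary task is to decide which of $\mathcal{C}_{k1}, \mathcal{C}_{k2}$ contains the true class; the correct binary label is therefore determined by $c$, and $q_k$ is precisely the probability the binary model assigns to that correct label. By the single-instance form of the NLL definition, $-\log q_k$ is exactly this binary NLL, so the sum over the path is the sum of binary NLLs, as claimed.

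The proof is short, and the only real subtlety lies in the bookkeeping of the second step: one must verify that the indicator functions genuinely select, at every node on the path, the branch probability corresponding to the \emph{true} binary label, so that each retained factor coincides with the quantity whose negative logarithm is the binary NLL. Once this correspondence is made explicit, the additive decomposition follows immediately from the identity $\log \prod = \sum \log$.
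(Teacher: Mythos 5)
Your proof is correct and follows essentially the same route as the paper's: reduce the one-hot NLL to $-\log \mathbf{\hat{p}}_i^{(c)}$, collapse each path factor to the probability of the true branch (your $q_k$ plays the role of the paper's $\hat{p}_{ik}^{\,\tilde{y}_{ik}}(1-\hat{p}_{ik})^{(1-\tilde{y}_{ik})}$ via the meta-labels $\tilde{y}_{ik}$), and let the logarithm turn the product into a sum of binary NLLs. The only difference is notational---indicator functions versus meta-label exponents---so nothing of substance is missing.
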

\begin{proof}
	The NLL of an instance is given by
	\begin{align}
		\label{eq:nll}
		\text{NLL} = - \mathbf{y}_i \log \mathbf{\hat{p}}_i = - \log \mathbf{\hat{p}}_i^{(c)}
	\end{align}
	where $\mathbf{\hat{p}}_i^{(c)}$ is the probability estimate for the true class $c$. Let $\mathcal{P}_c$ be the set of internal nodes on the path from the root to the leaf corresponding to class $c$. Then, $\mathbf{\hat{p}}_i^{(c)}$ can be expressed as 
	\begin{align}
		\mathbf{\hat{p}}_i^{(c)} = \prod_{k \in \mathcal{P}_c} \tilde{y}_{ik}\hat{p}_{ik} + (1-\tilde{y}_{ik})(1-\hat{p}_{ik})
	\end{align}
	where $\hat{p}_{ik} \in [0,1]$ is the scalar prediction and $\tilde{y}_{ik} \in \{0,1\}$ is the meta-label for instance $i$ for the binary model at node $k$ respectively. Because $\tilde{y}_{ik} \in \{0,1\}$, it is equivalent to write
	\begin{align}
		\mathbf{\hat{p}}_i^{(c)} = \prod_{k \in \mathcal{P}_c} \hat{p}_{ik}^{~\tilde{y}_{ik}}(1-\hat{p}_{ik})^{(1-\tilde{y}_{ik})}.
	\end{align}
	Plugging this into (\ref{eq:nll}) yields
	\begin{align}
		\text{NLL} &= - \log \prod_{k \in \mathcal{P}_c} \hat{p}_{ik}^{~\tilde{y}_{ik}}(1-\hat{p}_{ik})^{(1-\tilde{y}_{ik})}\\
					&= - \sum_{k \in \mathcal{P}_c} \log \Big( \hat{p}_{ik}^{~\tilde{y}_{ik}}(1-\hat{p}_{ik})^{(1-\tilde{y}_{ik})}\Big) \\
					&= - \sum_{k \in \mathcal{P}_c} \tilde{y}_{ik} \log \hat{p}_{ik} + (1-\tilde{y}_{ik}) \log (1-\hat{p}_{ik}),
	\end{align}
	the sum of NLLs from the models $k \in \mathcal{P}_c$.
\end{proof}
It directly follows that reducing the binary NLL for the model at internal node~$k$ by some amount~$\delta$ results in a reduction of the multiclass NLL by $\delta$ for each class corresponding to the leaf nodes that are descendants of node~$k$. This means that a calibration resulting in a binary NLL reduction of~$\delta$ for some internal node~$k$ reduces the multiclass NLL by~$\delta (n_k / n)$, where~$n_k$ is the number of examples that belong to classes whose corresponding leaf nodes are descendants of~$k$. 

The number of descendant leaf nodes for an internal node~$k$ in a balanced tree equals~$2^l$, where~$l$ is the length of the paths to the leaf nodes from~$k$. Therefore, it is more important to ensure the models nearer the root node are well calibrated than the models nearer leaf nodes. However, calibration of entire layers of a nested dichotomy should have an effect that is independent of the particular layer being calibrated. Note that even though each layer has twice as many internal models as the layer before it, each internal model at the lower layer is trained on approximately half the amount of data as the models in the previous layer. Platt scaling and isotonic regression, the internal calibration models we consider in our experiments, both have linear complexity in the number of examples, so the time taken to train calibration models for each layer of a nested dichotomy should be comparable.

\section{External Calibration}
\pgfplotstableread[col sep=tab,header=true]{
x	y
0.0138888888889	0.0191474221044
0.107142857143	0.0768255550544
0.10843373494	0.124979576206
0.107594936709	0.176925540017
0.221649484536	0.227344941908
0.245	0.274894744713
0.300970873786	0.325205514363
0.340425531915	0.378164445044
0.421052631579	0.425833188011
0.451187335092	0.474712801986
0.548812664908	0.525287198014
0.578947368421	0.574166811989
0.659574468085	0.621835554956
0.699029126214	0.674794485637
0.755	0.725105255287
0.778350515464	0.772655058092
0.892405063291	0.823074459983
0.89156626506	0.875020423794
0.892857142857	0.923174444946
0.986111111111	0.980852577896
}\aloione

\pgfplotstableread[col sep=tab,header=true]{
x	y
0.0102040816327	0.0223532369421
0.0462427745665	0.0736865724452
0.0960512273212	0.125323901744
0.154279279279	0.174684334384
0.216743119266	0.224867966989
0.246268656716	0.274050391249
0.313901345291	0.323567842625
0.364583333333	0.373724314886
0.434782608696	0.423039352316
0.598006644518	0.472384798762
0.66	0.52480770126
0.622950819672	0.574352740959
0.69375	0.625379590694
0.769911504425	0.674152414018
0.797752808989	0.726031125756
0.871428571429	0.772798650058
0.947368421053	0.822244274696
1.0	0.873511478841
0.958333333333	0.927149495506
1.0	0.962564315126
}\aloitwo

\pgfplotstableread[col sep=tab,header=true]{
x	y
0.009656004828	0.0206368603459
0.0475314320761	0.0728549953626
0.0966139954853	0.123212528152
0.152187698161	0.173826191355
0.233802816901	0.223968263358
0.308005427408	0.273273714456
0.404896421846	0.324518147657
0.471014492754	0.37426811207
0.52	0.425415446356
0.576687116564	0.474519307201
0.712962962963	0.52272401725
0.765957446809	0.571484140789
0.872093023256	0.626569295673
0.961538461538	0.67458802985
0.866666666667	0.724705921367
0.928571428571	0.773818116669
1.0	0.827243147674
1.0	0.873179646065
1.0	0.917183694904
1.0	0.965474041745
}\aloithree

\pgfplotstableread[col sep=tab,header=true]{
x	y
0.00466556204761	0.0145126974086
0.0397140587768	0.0715193766989
0.118998768978	0.12216633025
0.19023689878	0.173486844793
0.299441340782	0.22419780792
0.406956521739	0.273666045116
0.461009174312	0.323547806005
0.517647058824	0.373174402019
0.572463768116	0.424369188789
0.675	0.474096472539
0.786516853933	0.523772481922
0.875	0.571343918034
0.866666666667	0.626727386425
0.947368421053	0.672752545585
1.0	0.722140276084
0.952380952381	0.776117850958
1.0	0.829524254704
1.0	0.874751751085
1.0	0.909331279949
}\aloifour

\pgfplotstableread[col sep=tab,header=true]{
x	y
0.00241114329824	0.00975511591901
0.045175588135	0.0708129993205
0.141336739038	0.121868718501
0.227902946274	0.172172233192
0.394444444444	0.222706963243
0.485193621868	0.272605304346
0.548286604361	0.321435311625
0.598837209302	0.371610729887
0.660194174757	0.426087155106
0.824175824176	0.472262350835
0.910447761194	0.526570557137
0.923076923077	0.574960504151
0.931034482759	0.625180060084
1.0	0.676967343578
1.0	0.719663799177
0.933333333333	0.771641378792
1.0	0.829764346809
1.0	0.873026535545
1.0	0.904970697338
}\aloifive

\pgfplotstableread[col sep=tab,header=true]{
x	y
0.00129198966408	0.00603459139884
0.0509871244635	0.0701003050992
0.160559305689	0.121075497722
0.306709265176	0.172874331143
0.462633451957	0.22321236465
0.597826086957	0.273190910834
0.576763485477	0.32155926943
0.734848484848	0.371461626329
0.84693877551	0.424518408774
0.910447761194	0.472648130956
0.963636363636	0.522501987747
0.97619047619	0.575929333832
0.943396226415	0.62524727833
1.0	0.670894495118
1.0	0.725854867196
0.916666666667	0.770277679809
1.0	0.829427580548
1.0	0.872093309329
}\aloisix

\pgfplotstableread[col sep=tab,header=true]{
x	y
0.0321782178218	0.023565655571
0.104838709677	0.0729313565863
0.192307692308	0.125971518841
0.192893401015	0.175092358615
0.166666666667	0.224906730722
0.194444444444	0.276560561075
0.279761904762	0.326366540933
0.358695652174	0.37393454474
0.325714285714	0.423611234171
0.454022988506	0.473238671631
0.545977011494	0.526761328369
0.674285714286	0.576388765829
0.641304347826	0.62606545526
0.720238095238	0.673633459067
0.805555555556	0.723439438925
0.833333333333	0.775093269278
0.807106598985	0.824907641385
0.807692307692	0.874028481159
0.895161290323	0.927068643414
0.967821782178	0.976434344429
}\aloicalone

\pgfplotstableread[col sep=tab,header=true]{
x	y
0.0173215040135	0.0187420846122
0.0909090909091	0.0733493125827
0.122399020808	0.123046062662
0.178512396694	0.174923230735
0.240246406571	0.224943776353
0.302428256071	0.273503567769
0.301675977654	0.325864940319
0.346570397112	0.376641576977
0.456928838951	0.424466468592
0.489177489177	0.47313224411
0.466981132075	0.526864137143
0.529702970297	0.574152042423
0.602409638554	0.626272711585
0.717647058824	0.674896521963
0.684563758389	0.722557216086
0.781690140845	0.774433702041
0.757009345794	0.825496211796
0.867768595041	0.874174476672
0.897058823529	0.927547814997
0.975460122699	0.976806901216
}\aloicaltwo

\pgfplotstableread[col sep=tab,header=true]{
x	y
0.00851063829787	0.00906818864045
0.0921866216976	0.0726115979305
0.113947128532	0.122992135706
0.209039548023	0.173132645769
0.209979209979	0.224079562967
0.30843373494	0.273644539507
0.312714776632	0.325277416902
0.32119205298	0.374211215362
0.421052631579	0.423689393516
0.442105263158	0.472636659709
0.5	0.523389283491
0.580808080808	0.577118323881
0.613793103448	0.624268300625
0.664383561644	0.672705854092
0.688888888889	0.726670990435
0.759259259259	0.77602225109
0.80198019802	0.825787377123
0.87962962963	0.877266926731
0.915254237288	0.926702193341
0.928934010152	0.977714596555
}\aloicalthree

\pgfplotstableread[col sep=tab,header=true]{
x	y
0.00663761860162	0.00611911767163
0.0759193357058	0.0723511464525
0.111282843895	0.122017780741
0.188861985472	0.173373113546
0.200750469043	0.223572741593
0.255369928401	0.274841351809
0.281818181818	0.325038435403
0.375	0.37631223294
0.446808510638	0.424620935975
0.566371681416	0.472184999995
0.513333333333	0.523354726834
0.624060150376	0.572546840122
0.639705882353	0.621967183205
0.709401709402	0.675192050854
0.704081632653	0.725325892069
0.754901960784	0.772923360478
0.753246753247	0.82422685456
0.860465116279	0.878172449585
0.866666666667	0.926928241432
0.94375	0.980254287271
}\aloicalfour

\pgfplotstableread[col sep=tab,header=true]{
x	y
0.00484188228174	0.00471673123846
0.0734966592428	0.07174753037
0.101859337106	0.122150326337
0.171983356449	0.172708244113
0.200764818356	0.223093020416
0.262529832936	0.274066518835
0.3125	0.323056153464
0.340996168582	0.374899322309
0.394495412844	0.425956618532
0.485549132948	0.474483187623
0.56875	0.522530890021
0.723076923077	0.57672667108
0.81954887218	0.623586338527
0.739130434783	0.671918686858
0.747572815534	0.724837478144
0.835051546392	0.775914629566
0.776119402985	0.82880231078
0.88	0.876724723724
0.879120879121	0.926432358209
0.928571428571	0.984976424133
}\aloicalfive

\pgfplotstableread[col sep=tab,header=true]{
x	y
0.00337650494769	0.00326799141466
0.0622627182992	0.0717744920332
0.101652892562	0.122356906973
0.16231884058	0.172593896684
0.218295218295	0.223871223156
0.311239193084	0.273750134963
0.305369127517	0.322888293573
0.381395348837	0.374029443273
0.451776649746	0.423875232996
0.448051948052	0.473002357653
0.604651162791	0.524871017032
0.619402985075	0.57470153669
0.7	0.624661358929
0.754385964912	0.675169738979
0.844660194175	0.726740604337
0.741935483871	0.775844630003
0.83908045977	0.823279551993
0.896103896104	0.875534148775
0.907216494845	0.927426321652
0.966804979253	0.981509635828
}\aloicalsix

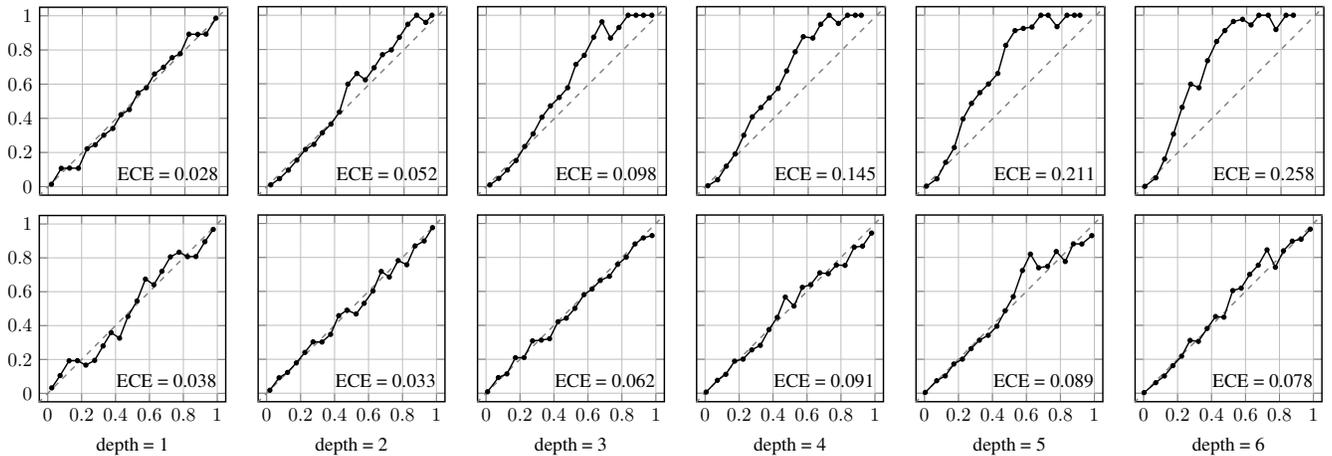
\begin{figure*}[t]
	\centering
	\resizebox{0.177\textwidth}{!}{	
		\begin{tikzpicture}     
	        \begin{axis}[    
			        cycle list name=growthlist,
		            width=5cm,
		            height=5cm,
		            line width=0.75pt, 
		            mark size=1pt,                              
		            ymin=-0.05,
		            ymax=1.05,
		            ytick={0,0.2,0.4,0.6,0.8,1.0},
		            xtick={0,0.2,0.4,0.6,0.8,1.0},
		            xticklabels={,,},
		            xmin=-0.05,
		            xmax=1.05,
		            grid=major,
	            ]       
		        \addplot [gray, dashed] {x};
		        \addplot table [y=x, x=y] {\aloione};   
		        \node[] at (axis cs: .7,.075) {ECE = 0.028};
	        \end{axis}
	    \end{tikzpicture}
	}
	\resizebox{0.159\textwidth}{!}{	
		\begin{tikzpicture}     
	        \begin{axis}[    
			        cycle list name=growthlist,
		            width=5cm,
		            height=5cm,
		            line width=0.75pt, 
		            mark size=1pt,                              
		            ymin=-0.05,
		            ymax=1.05,
		            ytick={0,0.2,0.4,0.6,0.8,1.0},
		            xtick={0,0.2,0.4,0.6,0.8,1.0},
		            yticklabels={,,},
		            xticklabels={,,},
		            xmin=-0.05,
		            xmax=1.05,
		            grid=major,
	            ]       
		        \addplot [gray, dashed] {x};
		        \addplot table [y=x, x=y] {\aloitwo};   
		        \node[] at (axis cs: .7,.075) {ECE = 0.052};
	        \end{axis}
	    \end{tikzpicture}
	}
	\resizebox{0.159\textwidth}{!}{	
		\begin{tikzpicture}     
	        \begin{axis}[    
			        cycle list name=growthlist,
		            width=5cm,
		            height=5cm,
		            line width=0.75pt,
		            mark size=1pt,                               
		            ymin=-0.05,
		            ymax=1.05,
		            ytick={0,0.2,0.4,0.6,0.8,1.0},
		            xtick={0,0.2,0.4,0.6,0.8,1.0},
		            yticklabels={,,},
		            xticklabels={,,},
		            xmin=-0.05,
		            xmax=1.05,
		            grid=major,
	            ]       
		        \addplot [gray, dashed] {x};
		        \addplot table [y=x, x=y] {\aloithree};   
		        \node[] at (axis cs: .7,.075) {ECE = 0.098};
	        \end{axis}
	    \end{tikzpicture}
	}
	\resizebox{0.159\textwidth}{!}{	
		\begin{tikzpicture}     
	        \begin{axis}[    
			        cycle list name=growthlist,
		            width=5cm,
		            height=5cm,
		            line width=0.75pt,
		            mark size=1pt,                               
		            ymin=-0.05,
		            ymax=1.05,
		            ytick={0,0.2,0.4,0.6,0.8,1.0},
		            xtick={0,0.2,0.4,0.6,0.8,1.0},
		            yticklabels={,,},
		            xticklabels={,,},
		            xmin=-0.05,
		            xmax=1.05,
		            grid=major,
	            ]       
		        \addplot [gray, dashed] {x};
		        \addplot table [y=x, x=y] {\aloifour};   
		        \node[] at (axis cs: .7,.075) {ECE = 0.145};
	        \end{axis}
	    \end{tikzpicture}
	}
	\resizebox{0.159\textwidth}{!}{	
		\begin{tikzpicture}     
	        \begin{axis}[    
			        cycle list name=growthlist,
		            width=5cm,
		            height=5cm,
		            line width=0.75pt,  
		            mark size=1pt,                             
		            ymin=-0.05,
		            ymax=1.05,
		            ytick={0,0.2,0.4,0.6,0.8,1.0},
		            xtick={0,0.2,0.4,0.6,0.8,1.0},
		            yticklabels={,,},
		            xticklabels={,,},
		            xmin=-0.05,
		            xmax=1.05,
		            grid=major,
	            ]       
		        \addplot [gray, dashed] {x};
		        \addplot table [y=x, x=y] {\aloifive};   
		        \node[] at (axis cs: .7,.075) {ECE = 0.211};
	        \end{axis}
	    \end{tikzpicture}
	}
	\resizebox{0.159\textwidth}{!}{	
		\begin{tikzpicture}     
	        \begin{axis}[    
			        cycle list name=growthlist,
		            width=5cm,
		            height=5cm,
		            line width=0.75pt,                               
		            mark size=1pt,
		            ymin=-0.05,
		            ymax=1.05,
		            ytick={0,0.2,0.4,0.6,0.8,1.0},
		            xtick={0,0.2,0.4,0.6,0.8,1.0},
		            yticklabels={,,},
		            xticklabels={,,},
		            xmin=-0.05,
		            xmax=1.05,
		            grid=major,
	            ]       
		        \addplot [gray, dashed] {x};
		        \addplot table [y=x, x=y] {\aloisix};   
		        \node[] at (axis cs: .7,.075) {ECE = 0.258};
	        \end{axis}
	    \end{tikzpicture}
	}
	 \\
	\resizebox{0.177\textwidth}{!}{	
		\begin{tikzpicture}     
	        \begin{axis}[    
			        cycle list name=growthlist,
		            width=5cm,
		            height=5cm,
		            line width=0.75pt, 
		            mark size=1pt,                              
		            ymin=-0.05,
		            ymax=1.05,
		            ytick={0,0.2,0.4,0.6,0.8,1.0},
		            xtick={0,0.2,0.4,0.6,0.8,1.0},
		            xmin=-0.05,
		            xmax=1.05,
		            grid=major,
		            xlabel={depth = 1},
	            ]       
		        \addplot [gray, dashed] {x};
		        \addplot table [y=x, x=y] {\aloicalone};   
		        \node[] at (axis cs: .7,.075) {ECE = 0.038};
	        \end{axis}
	    \end{tikzpicture}
	}
	\resizebox{0.159\textwidth}{!}{	
		\begin{tikzpicture}     
	        \begin{axis}[    
			        cycle list name=growthlist,
		            width=5cm,
		            height=5cm,
		            line width=0.75pt, 
		            mark size=1pt,                              
		            ymin=-0.05,
		            ymax=1.05,
		            ytick={0,0.2,0.4,0.6,0.8,1.0},
		            xtick={0,0.2,0.4,0.6,0.8,1.0},
		            yticklabels={,,},
		            xmin=-0.05,
		            xmax=1.05,
		            grid=major,
		            xlabel={depth = 2},
	            ]       
		        \addplot [gray, dashed] {x};
		        \addplot table [y=x, x=y] {\aloicaltwo};   
		        \node[] at (axis cs: .7,.075) {ECE = 0.033};
	        \end{axis}
	    \end{tikzpicture}
	}
	\resizebox{0.159\textwidth}{!}{	
		\begin{tikzpicture}     
	        \begin{axis}[    
			        cycle list name=growthlist,
		            width=5cm,
		            height=5cm,
		            line width=0.75pt,
		            mark size=1pt,                               
		            ymin=-0.05,
		            ymax=1.05,
		            ytick={0,0.2,0.4,0.6,0.8,1.0},
		            xtick={0,0.2,0.4,0.6,0.8,1.0},
		            yticklabels={,,},
		            xmin=-0.05,
		            xmax=1.05,
		            grid=major,
		            xlabel={depth = 3},
	            ]       
		        \addplot [gray, dashed] {x};
		        \addplot table [y=x, x=y] {\aloicalthree};   
		        \node[] at (axis cs: .7,.075) {ECE = 0.062};
	        \end{axis}
	    \end{tikzpicture}
	}
	\resizebox{0.159\textwidth}{!}{	
		\begin{tikzpicture}     
	        \begin{axis}[    
			        cycle list name=growthlist,
		            width=5cm,
		            height=5cm,
		            line width=0.75pt,
		            mark size=1pt,                               
		            ymin=-0.05,
		            ymax=1.05,
		            ytick={0,0.2,0.4,0.6,0.8,1.0},
		            xtick={0,0.2,0.4,0.6,0.8,1.0},
		            yticklabels={,,},
		            xmin=-0.05,
		            xmax=1.05,
		            grid=major,
		            xlabel={depth = 4},
	            ]       
		        \addplot [gray, dashed] {x};
		        \addplot table [y=x, x=y] {\aloicalfour};   
		        \node[] at (axis cs: .7,.075) {ECE = 0.091};
	        \end{axis}
	    \end{tikzpicture}
	}
	\resizebox{0.159\textwidth}{!}{	
		\begin{tikzpicture}     
	        \begin{axis}[    
			        cycle list name=growthlist,
		            width=5cm,
		            height=5cm,
		            line width=0.75pt,  
		            mark size=1pt,                             
		            ymin=-0.05,
		            ymax=1.05,
		            ytick={0,0.2,0.4,0.6,0.8,1.0},
		            xtick={0,0.2,0.4,0.6,0.8,1.0},
		            yticklabels={,,},
		            xmin=-0.05,
		            xmax=1.05,
		            grid=major,
				    xlabel={depth = 5}
	            ]       
		        \addplot [gray, dashed] {x};
		        \addplot table [y=x, x=y] {\aloicalfive};   
		        \node[] at (axis cs: .7,.075) {ECE = 0.089};
	        \end{axis}
	    \end{tikzpicture}
	}
	\resizebox{0.159\textwidth}{!}{	
		\begin{tikzpicture}     
	        \begin{axis}[    
			        cycle list name=growthlist,
		            width=5cm,
		            height=5cm,
		            line width=0.75pt,                               
		            mark size=1pt,
		            ymin=-0.05,
		            ymax=1.05,
		            ytick={0,0.2,0.4,0.6,0.8,1.0},
		            xtick={0,0.2,0.4,0.6,0.8,1.0},
		            yticklabels={,,},
		            xmin=-0.05,
		            xmax=1.05,
		            grid=major,
				    xlabel={depth = 6},
	            ]       
		        \addplot [gray, dashed] {x};
		        \addplot table [y=x, x=y] {\aloicalsix};   
		        \node[] at (axis cs: .7,.075) {ECE = 0.078};
	        \end{axis}
	    \end{tikzpicture}
	}	 
	\caption{Reliability plots for a nested dichotomy, cut off at increasing depth. The nested dichotomy has logistic regression base learners, which individually, are well calibrated. Top: no external calibration. As the depth increases, the nested dichotomy becomes increasingly under-confident because of the effect of multiplying probabilities together. Bottom: externally calibrated using vector scaling. \label{fig:depth_reliability}}
\end{figure*}

As well as calibrating each internal model, we also consider external calibration of the entire nested dichotomy. Even models like logistic regression are usually not perfectly calibrated in practice. We hypothesise that these minor miscalibrations accumulate as the nested dichotomy gets deeper, leading to overall more serious miscalibration in the overall model, which can be rectified by an external calibration model. Naturally, this effect is greater for problems with more classes, as the paths to leaf nodes will be longer. 

As an illustrative investigation into the effect of nested dichotomy depth on their calibration, we built a nested dichotomy with logistic regression base learners for the ALOI dataset (see Tab.~\ref{tab:datasets}). Figure~\ref{fig:depth_reliability} shows reliability plots for this nested dichotomy that has been ``cut-off'' at incrementally increasing depths. A test example is considered to be classified correctly at depth $d$ if its actual class is in the subset of classes $\mathcal{C}_k$ of the node $k$ with highest probability and maximum depth $d$. Limited to a depth of one, the nested dichotomy is simply a single binary logistic regression model which exhibits good calibration. However, as the depth cut-off limit increases, it is clear that the nested dichotomy becomes increasingly \textit{under-confident}, i.e., bins that have high accuracy often have low confidence (Fig.~\ref{fig:depth_reliability}, top row). This corresponds to the reliability curve sitting above the diagonal line. The ECE increases approximately linearly with the depth of the tree.

 This is adequately and efficiently compensated for by applying vector scaling (Fig.~\ref{fig:depth_reliability}, bottom row). Vector scaling exhibits low complexity in the number of classes---only two parameters per class---making it suitable for problems with many classes typically handled by nested dichotomies. For externally calibrated nested dichotomies, the ECE initially increases linearly with the depth of the tree (although for~$d > 1$, the ECE values are much lower than their uncalibrated counterparts). However, at $d=5$, the ECE levels off and even begins to decrease slightly.

\section{Experiments}
\begin{table}
	\centering
	\caption{Datasets used in our experiments.\label{tab:datasets}}
	\fontsize{9pt}{10pt} \selectfont
	\begin{threeparttable}
		\begin{tabular}{lccc}
			\toprule
			\bf Name & \bf Instances & \bf Features & \bf Classes \\
			\midrule
			optdigits\tnote{1} & 5,620 & 64 & 10\\
			micromass\tnote{1} & 571 & 1,301 & 20 \\
			letter\tnote{1} & 20,000 & 16 & 26 \\
			devanagari\tnote{1} & 92,000 & 1,000 & 46 \\
			\midrule
			RCV1\tnote{2} & 15,564/518,571 & 47,236 & 53 \\
			sector\tnote{2} & 6,412/3,207 & 55,197 & 105 \\
			ALOI\tnote{2} & 97,200/10,800 & 128 & 1,000 \\
			ILSVR2010\tnote{3} & 1,111,406/150,000 & 1,000 & 1,000 \\
			ODP-5K\tnote{4} & 361,488/180,744 & 422,712 & 5,000\\
			\bottomrule
		\end{tabular}
		\begin{tablenotes}
			\centering
			\item [1] UCI Repository~\cite{lichman2013uci}	
			\item [2] LIBSVM Repository~\cite{chang2011libsvm}
			\item [3] ImageNet~\cite{ILSVRC15}
			\item [4] ODP~\cite{bennett2009refined}
		\end{tablenotes}
	\end{threeparttable}
\end{table}
In this section, we present experimental results of calibration of nested dichotomies using different base classifiers on a series of datasets. The datasets we used in our experiments are listed in Table~\ref{tab:datasets}, and were chosen to span a range of numbers of classes. Optdigits, letter and devanagari~\cite{acharya2015deep} are character recognition datasets for digits, latin letters, and Devanagari script respectively. Micromass~\cite{mahe2014automatic} is for the identification of microorganisms from mass spectroscopy data. RCV1, sector and ODP-5K are text categorisation tasks, while ALOI and ILSVR2010 are object recognition tasks. We use the visual codewords representation for ILSVR2010, available from the ImageNet competition website.

In order to obtain performance estimates, we performed 10 times 10-fold cross-validation for the datasets from the UCI repository, while adopting the standard train/test splits for the larger datasets with a larger number of classes ($m>50$). The number of instances stated in Table~\ref{tab:datasets} for the larger datasets are split into number of training and test instances. Note that in each fold and run of 10 times 10-fold cross-validation, a different random nested dichotomy structure is constructed. In the case of the larger datasets, the average of 10 randomly constructed nested dichotomies is reported. Standard deviations are given in parentheses, and the best result per row appears in bold face. The original ODP dataset contains 105,000 classes---we took the subset of the most frequent 5,000 classes to create ODP-5K for the purposes of this investigation. We also reduce the dimensionality to 1,000 when evaluating the performance on boosted trees, by using a Gaussian random projection~\cite{bingham2001random}. 

\begin{table}[t]
	\centering
	\caption{NLL of nested dichotomies with logistic regression, before and after external calibration is applied.\label{tab:logistic_regression_nll}}
	\fontsize{9.5pt}{10.5pt} \selectfont
	\begin{tabular}{lcc}
		\toprule
		\bf Dataset & \bf Baseline & \bf External VS\\
		\midrule
		optdigits & 0.302 (0.07) & \bf 0.301 (0.08)\\
		micromass &  5.918 (1.83) & \bf 1.878 (0.51)\\
		letter &  1.502 (0.06) & \bf 1.435 (0.08)\\
		devanagari & 2.430 (0.11) & \bf 2.028 (0.05) \\
		\midrule
		RCV1 & 1.004 (0.02) & \bf 0.584 (0.01)\\
		sector &  2.858 (0.01) & \bf 1.248 (0.02)\\
		ALOI & 3.604 (0.02) & \bf 3.050 (0.03) \\
		ILSVR2010 & 6.442 (0.01) & \bf 5.780 (0.01) \\
		ODP-5K & 5.792 (0.01) & \bf 4.967 (0.01)\\
		\bottomrule
	\end{tabular}
\end{table}

\begin{table}[t]
	\centering
	\caption{Classification accuracy of nested dichotomies with logistic regression, before and after external calibration.\label{tab:logistic_regression_acc}}
	\fontsize{9.5pt}{10.5pt} \selectfont
	\begin{tabular}{lcc}
		\toprule
		\bf Dataset & \bf Baseline & \bf External VS\\
		\midrule
		optdigits &  0.905 (0.02) & \bf 0.906 (0.03)\\
		micromass & \bf 0.804 (0.06) &  0.772 (0.05)\\
		letter &  0.512 (0.03) &  \bf 0.536 (0.03)\\
		devanagari & 0.428 (0.02) & \bf 0.428 (0.02) \\
		\midrule
		RCV1 & 0.814 (0.01) & \bf 0.855 (0.00)\\
		sector &  0.848 (0.01) & \bf 0.867 (0.00)\\
		ALOI & 0.274 (0.01) & \bf 0.331 (0.01)\\
		ILSVR2010 & \bf 0.063 (0.00) & 0.053 (0.00)\\
		ODP-5K & 0.189 (0.00) & \bf 0.228 (0.00)\\
		\bottomrule
	\end{tabular}
\end{table}

We implemented vector scaling~\cite{guo2017calibration} and nested dichotomies in Python, and used the implementations of the base learners, isotonic regression and Platt scaling available in \texttt{scikit-learn}~\cite{pedregosa2011scikit}. 

\subsection{Well Calibrated Base Learners}
As shown in Figure~\ref{fig:depth_reliability}, overall calibration of nested dichotomies can degrade as the depth of the tree increases, even if the base learners are well calibrated. To further investigate the effects of nested dichotomy depth on predictive performance, we performed experiments to determine the extent to which the classification accuracy and NLL are affected as well.

\begin{table*}[!h]
	\centering
	\caption{NLL of nested dichotomies with poorly calibrated base classifiers.\label{tab:nll_poor}}
	\fontsize{9.5pt}{10.5pt} \selectfont
	\begin{tabular}{llcccccc}
		\toprule
		\bf Base Model & \bf Dataset & \bf Baseline & \bf External VS & \bf Internal PS & \bf Both PS & \bf Internal IR & \bf Both IR \\
		\midrule
		\multirow{9}{*}{Na\"ive Bayes} 
			&  optdigits  &   4.252 (0.92)  &   0.841 (0.13)  &   0.852 (0.11)  &   0.807 (0.09)  &   0.714 (0.12)  &  \bf 0.642 (0.09)\\
			 &  micromass  &   8.668 (1.72)  &   1.624 (0.42)  &   0.926 (0.08)  &   0.752 (0.12)  &   0.766 (0.12)  &  \bf 0.712 (0.15)\\
			 &  letter  &   2.338 (0.08)  &   2.155 (0.08)  &   2.165 (0.06)  &   2.068 (0.07)  &   2.055 (0.07)  &  \bf 1.953 (0.06)\\
			 &  devanagari  &   13.14 (0.59)  &   3.310 (0.16)  &   2.986 (0.05)  &   2.602 (0.02)  &   2.754 (0.07)  &  \bf 2.444 (0.05) \\
			 \cmidrule{2-8}
			 &  RCV1  &   1.690 (0.19)  &  \bf 0.866 (0.01)  &   1.145 (0.07)  &   0.947 (0.03)  &   0.998 (0.04)  &   0.914 (0.02)\\
			 &  sector		 &   3.795 (0.36)  &   \bf 1.408 (0.09)  &   2.075 (0.20)  &   1.518 (0.10)  &   1.913 (0.21)  &   1.774 (0.09)\\
			 &  ALOI 			 &  32.98 (0.43)  &   6.841 (0.02)  &   5.533 (0.02)  &  4.333 (0.03)  &   4.859 (0.03)  &   \bf 4.137 (0.01) \\
			 &  ILSVR2010  &   32.39 (0.20)  &   6.819 (0.00)  &   6.162 (0.00)  &  6.125 (0.01)  &   6.176 (0.00)  &  \bf 6.116 (0.00)\\
			 &  ODP-5K  &  8.498 (0.31)  &  \bf 5.161 (0.05)  &  6.103 (0.00)  &  5.518 (0.02)  &  6.051 (0.11)  &  5.364 (0.01) \\
		\midrule
   		\multirow{9}{*}{Boosted Trees}
			 &  optdigits  &   3.861 (0.57)  &   0.634 (0.07)  &   0.403 (0.04)  &  \bf 0.298 (0.04)  &   0.390 (0.03)  &   0.303 (0.04)\\
			 &  micromass  &   10.01 (2.05)  &   2.518 (0.52)  &   1.263 (0.11)  &   1.005 (0.14)  &   1.235 (0.27)  &  \bf 0.959 (0.19)\\
			 &  letter  &   4.869 (0.27)  &   0.924 (0.04)  &   0.563 (0.02)  &   \bf 0.446 (0.03)  &   0.557 (0.02)  &   0.449 (0.03)\\
			 &  devanagari  &  3.424 (0.28)  & 1.030 (0.04)  &  2.266 (0.17)  &  \bf 0.710 (0.02)  & 1.977 (0.12) &  0.735 (0.02)\\
			 \cmidrule{2-8}
			 &  RCV1  &  1.964 (0.02)  &   1.029 (0.00)  &   0.932 (0.01)  &   \bf 0.716 (0.01)  &   0.862 (0.00)  &   0.745 (0.01)\\
			 &  sector  &  3.631 (0.20)  &   2.910 (0.11)  &   2.672 (0.03)  &  \bf 2.036 (0.03)  &   2.597 (0.05)  &   2.208 (0.07)\\
			 &  ALOI  &   4.443 (0.26)  &   2.512 (0.05)  &   4.889 (0.03)  &  \bf 1.056 (0.02)  &   4.284 (0.04)  &  1.173 (0.03)\\
			 &  ILSVR2010	 &   6.553 (0.10)  &   5.863 (0.00)  &   5.643 (0.00)  &   5.219 (0.00)  &   5.452 (0.00)  &   \bf 5.201 (0.00)\\
			 &  ODP-5K  &  7.733 (0.04)  &  7.192 (0.00)  &  7.120 (0.00)  &  6.603 (0.00)  &  6.981 (0.00) & \bf 6.576 (0.00) \\
		\bottomrule
	\end{tabular}
\end{table*}
\begin{table*}[!h]
	\centering
	\caption{Classification accuracy of nested dichotomies with poorly calibrated base classifiers.\label{tab:acc_poor}}
	\fontsize{9.5pt}{10.5pt} \selectfont
	\begin{tabular}{llcccccc}
		\toprule
		\bf Base Model & \bf Dataset & \bf Baseline & \bf External VS & \bf Internal PS & \bf Both PS & \bf Internal IR & \bf Both IR \\
		\midrule
		\multirow{9}{*}{Na\"ive Bayes}
			&  optdigits  &   0.719 (0.05)  &   0.749 (0.04)  &   0.719 (0.05)  &   0.735 (0.04)  &   0.774 (0.04)  &  \bf 0.795 (0.04)\\
			 &  micromass  &   0.749 (0.05)  &   0.724 (0.05)  &   0.770 (0.05)  &   0.762 (0.05)  &  \bf 0.772 (0.05)  &   0.756 (0.05)\\
			 &  letter  &   0.329 (0.02)  &   0.364 (0.03)  &   0.318 (0.03)  &   0.365 (0.03)  &   0.376 (0.03)  &  \bf 0.412 (0.03)\\
			 &  devanagari  &  0.202 (0.02)  &   0.224 (0.04)  &   0.167 (0.02)  &   0.269 (0.01)  &   0.265 (0.04)  &   \bf 0.340 (0.01)\\
			 \cmidrule{2-8}
		  	 &  RCV1  &   0.644 (0.04)  &   \bf 0.781 (0.00)  &   0.691 (0.03)  &   0.756 (0.00)  &   0.734 (0.01)  &   0.765 (0.01)\\
		  	 &  sector	 &   0.337 (0.07)  &   \bf 0.772 (0.01)  &   0.633 (0.04)  &   0.737 (0.03)  &   0.692 (0.04)  &   0.690 (0.01)\\
		  	 &  ALOI  &   0.024 (0.00)  &   0.029 (0.00)  &   0.019 (0.00)  &  0.124 (0.00)  &   0.094 (0.00)  &   \bf 0.166 (0.01)\\
		  	 &  ILSVR2010  &   0.009 (0.00)  &   0.015 (0.00)  &   0.014 (0.00)  &   0.019 (0.00)  &  0.021 (0.00)  &   \bf 0.026 (0.00)\\
		  	 &  ODP-5K  &  0.043 (0.01)  &  \bf 0.210 (0.00)  &  0.091 (0.00)  &  0.161 (0.00)  &  0.135 (0.01)  &  0.180 (0.00)  \\
		\midrule 
   		\multirow{9}{*}{Boosted Trees}
			&  optdigits  &   0.888 (0.02)  &   0.883 (0.02)  &  \bf 0.922 (0.01)  &   0.917 (0.01)  &   0.921 (0.01)  &   0.915 (0.01)\\
			 &  micromass  &   0.710 (0.06)  &   0.650 (0.06)  &  \bf 0.741 (0.06)  &   0.729 (0.06)  &   0.737 (0.06)  &   0.727 (0.05)\\
			 &  letter  &   0.859 (0.01)  &   0.851 (0.01)  &   0.888 (0.01)  &   0.883 (0.01)  &   \bf 0.890 (0.01)  &   0.884 (0.01)\\
			 &  devanagari  &  0.103 (0.06)  &  0.710 (0.01) &  0.488 (0.11)  &  \bf 0.793 (0.01)  &  0.636 (0.04) &  0.783 (0.01)\\
			 \cmidrule{2-8}
		  	 &  RCV1  &   0.681 (0.06)  &   0.748 (0.01)  &   0.810 (0.00)  &   \bf 0.814 (0.01)  &   0.810 (0.00)  &   0.807 (0.00)\\
		  	 &  sector  &   0.174 (0.08)  &   0.409 (0.02)  &   \bf 0.603 (0.01)  &   0.576 (0.01)  &   0.578 (0.01)  &   0.552 (0.01)\\
			 &  ALOI  &   0.071 (0.03)  &   0.451 (0.01)  &   0.161 (0.01)  &  \bf 0.743 (0.01)  &   0.368 (0.01)  &   0.723 (0.00)\\
			 &  ILSVR2010  &   0.019 (0.00)  &   0.040 (0.00)  &   0.093 (0.00)  &   \bf 0.102 (0.00)  &   0.097 (0.00)  &  0.100 (0.00)\\
		  	 &  ODP-5K  &  0.032 (0.00)  &  0.038 (0.00)  &  0.055 (0.00)  &  0.068 (0.00)  &  0.062 (0.00)  & \bf 0.072 (0.00) \\
		\bottomrule
	\end{tabular}
\end{table*}

Tables~\ref{tab:logistic_regression_nll} and~\ref{tab:logistic_regression_acc} show the NLL and accuracy respectively of nested dichotomies with logistic regression, before and after external calibration is applied. Logistic regression models are known to be well-calibrated~\cite{niculescu2005predicting}. Vector scaling~\cite{guo2017calibration} is used as the external calibration model. We use a 10\% stratified sample of the training data to train the external calibration model, and the remaining 90\% to build the nested dichotomy including the base classifiers. 

\subsubsection{Discussion.}
Tables~\ref{tab:logistic_regression_nll} and~\ref{tab:logistic_regression_acc} show that, for all datasets, a reduction in NLL is observed after applying external calibration with vector scaling (External VS). For some of the datasets with fewer classes (optdigits and letter), the reduction is modest, but the larger datasets see substantial improvements. Interestingly, for some datasets a large improvement in classification accuracy is also observed, especially for the datasets with more classes. Also, the classification accuracy degrades for ILSVR2010 and micromass, despite a large improvement in NLL.

\subsection{Poorly Calibrated Base Learners}
Tables~\ref{tab:nll_poor} and~\ref{tab:acc_poor} show the NLL and classification accuracy respectively of nested dichotomies trained with poorly calibrated base learners, when different calibration strategies are applied. Specifically, we considered nested dichotomies with na\"ive Bayes and boosted decision trees as the base learners. The calibration schemes compared are internal Platt scaling (Internal PS), internal isotonic regression (Internal IR) and external vector scaling (External VS), as well as each internal calibration scheme in conjunction with external vector scaling (Both PS and Both IR, respectively). Three-fold cross validation is used to produce the training data for the internal calibration models, rather than splitting the training data. This is to ensure that each internal calibration model has a reasonable amount of data points to train on, given that internal nodes near the leaves often have few training data available.  When external calibration is performed, 10\% of the data is held out to train the external calibration model. Note that this means 10\% less data is available to train the nested dichotomy and (if applicable) perform internal calibration.  Gaussian na\"ive Bayes is applied for optdigits, micromass, letter and devanagari, and multinomial na\"ive Bayes is used for RCV1, sector, ALOI, ILSVR2010 and ODP-5K as they have sparse features. We use 50 decision trees with AdaBoost~\cite{freund1996game}, limiting the depth of the trees to three.

\subsubsection{Discussion.}
Tables~\ref{tab:nll_poor} and~\ref{tab:acc_poor} show that applying internal calibration is very beneficial in terms of both NLL and classification accuracy. There is no combination of base learner and dataset for which the baseline gives the best results, and there are very few cases where the baseline does not perform the worst out of every scheme. When na\"ive Bayes is used as the base learner, applying internal calibration with isotonic regression always gives better results than the baseline, and when an ensemble of boosted trees is used as the base learner, applying internal Platt scaling always outperforms the uncalibrated case. It is well known that these calibration methods are well-suited to the respective base learners~\cite{niculescu2005predicting}, and this appears to also apply when they are used in a nested dichotomy. 

External calibration also has a positive effect on both NLL and classification accuracy in most cases compared to the baseline. However, the best results are usually obtained when both internal and external calibration are applied together. For na\"ive Bayes, the smaller datasets as well as the two object recognition datasets (ALOI and ILSVR2010) generally see the best performance for both NLL and classification accuracy when applying internal isotonic regression in conjunction with external calibration. Interestingly, the best results for the three text categorisation datasets (RCV1, sector and ODP-5K) were obtained through external calibration only.

Performing both internal Platt scaling and external calibration also gives the best NLL performance for nested dichotomies with boosted trees in most cases, although the improvement compared to isotonic regression is usually small. However, with boosted trees, performance in terms of classification accuracy is less consistent, often being greater when only internal calibration is applied.

\section{Conclusion}
In this paper, we show that the predictive performance of nested dichotomies can be substantially improved by applying calibration techniques. Calibrating the internal models increases the likelihood that the path to the leaf node corresponding to the true class is assigned high probability, while external calibration can correct for the systematic under-confidence exhibited by nested dichotomies. Both of these techniques have been empirically shown to provide large performance gains in terms of accuracy and NLL for a range of datasets when applied individually. Additionally, when both internal and external calibration are applied together, the performance often improves further. Improvements are especially noticeable when the number of classes is high.

Future work in this domain includes evaluating alternative external calibration methods. In our experiments, we applied vector scaling as it is an efficient and scalable solution for large multiclass tasks. However, when computational resources are available, it is possible that employing a more complex method such as matrix scaling, or isotonic regression with one-vs-rest, could provide superior results. It would also be interesting to investigate whether such calibration measures are as effective for other methods of constructing nested dichotomies than random subset selection~\cite{dong2005ensembles,leathart2016building,wever2018ensembles,melnikov2018effectiveness,leathart2018ensembles}. We expect that the calibration techniques discussed in this paper will transfer to such methods. Lastly, it would be of value to evaluate the effect of selective calibration of layers in nested dichotomies. It may be the case that reductions in training time can be achieved by only calibrating the worst models in the tree with little impact on predictive performance. It may also be beneficial to apply Platt scaling to internal models with few training data points instead of isotonic regression, as isotonic regression is known to overfit on smaller samples.

\section{Acknowledgments} 

This research was supported by the Marsden Fund Council from Government funding, administered by the Royal Society of New Zealand. The authors also thank Steven Lang for helpful discussions.

\fontsize{9pt}{10pt} \selectfont

\bibliography{aaai-nd-calibration.bib}

\begin{thebibliography}{}

\bibitem[\protect\citeauthoryear{Acharya, Pant, and
  Gyawali}{2015}]{acharya2015deep}
Acharya, S.; Pant, A.~K.; and Gyawali, P.~K.
\newblock 2015.
\newblock Deep learning based large scale handwritten devanagari character
  recognition.
\newblock In {\em Proceedings of the International Conference on Software,
  Knowledge, Information Management and Applications},  1--6.
\newblock IEEE.

\bibitem[\protect\citeauthoryear{Agrawal \bgroup et al\mbox.\egroup
  }{2013}]{agrawal2013multi}
Agrawal, R.; Gupta, A.; Prabhu, Y.; and Varma, M.
\newblock 2013.
\newblock Multi-label learning with millions of labels: Recommending advertiser
  bid phrases for web pages.
\newblock In {\em Proceedings of the International World Wide Web Conference},
  13--24.

\bibitem[\protect\citeauthoryear{Bengio, Weston, and
  Grangier}{2010}]{bengio2010label}
Bengio, S.; Weston, J.; and Grangier, D.
\newblock 2010.
\newblock Label embedding trees for large multi-class tasks.
\newblock In {\em Proceedings of the Conference on Neural Information
  Processing Systems},  163--171.

\bibitem[\protect\citeauthoryear{Bennett and Nguyen}{2009}]{bennett2009refined}
Bennett, P.~N., and Nguyen, N.
\newblock 2009.
\newblock Refined experts: improving classification in large taxonomies.
\newblock In {\em Proceedings of the International ACM SIGIR Conference on
  Research and Development in Information Retrieval},  11--18.
\newblock ACM.

\bibitem[\protect\citeauthoryear{Beygelzimer, Langford, and
  Ravikumar}{2009}]{beygelzimer2009error}
Beygelzimer, A.; Langford, J.; and Ravikumar, P.
\newblock 2009.
\newblock Error-correcting tournaments.
\newblock In {\em Proceedings of the International Conference on Algorithmic
  Learning Theory},  247--262.
\newblock Springer.

\bibitem[\protect\citeauthoryear{Bingham and Mannila}{2001}]{bingham2001random}
Bingham, E., and Mannila, H.
\newblock 2001.
\newblock Random projection in dimensionality reduction: applications to image
  and text data.
\newblock In {\em Proceedings of the ACM SIGKDD International Conference on
  Knowledge Discovery and Data Mining},  245--250.
\newblock ACM.

\bibitem[\protect\citeauthoryear{Chang and Lin}{2011}]{chang2011libsvm}
Chang, C.-C., and Lin, C.-J.
\newblock 2011.
\newblock {LIBSVM}: a library for support vector machines.
\newblock {\em ACM Transactions on Intelligent Systems and Technology} 2(3):27.

\bibitem[\protect\citeauthoryear{Choromanska and
  Langford}{2015}]{choromanska2015logarithmic}
Choromanska, A.~E., and Langford, J.
\newblock 2015.
\newblock Logarithmic time online multiclass prediction.
\newblock In {\em Proceedings of the Conference on Neural Information
  Processing Systems},  55--63.

\bibitem[\protect\citeauthoryear{Daum{\'e} \bgroup et al\mbox.\egroup
  }{2017}]{daume17logarithmic}
Daum{\'e}, III, H.; Karampatziakis, N.; Langford, J.; and Mineiro, P.
\newblock 2017.
\newblock Logarithmic time one-against-some.
\newblock In {\em Proceedings of the International Conference on Machine
  Learning},  923--932.
\newblock PMLR.

\bibitem[\protect\citeauthoryear{DeGroot and
  Fienberg}{1983}]{degroot1983comparison}
DeGroot, M.~H., and Fienberg, S.~E.
\newblock 1983.
\newblock The comparison and evaluation of forecasters.
\newblock {\em The Statistician}  12--22.

\bibitem[\protect\citeauthoryear{Dekel and Shamir}{2010}]{dekel2010multiclass}
Dekel, O., and Shamir, O.
\newblock 2010.
\newblock Multiclass-multilabel classification with more classes than examples.
\newblock In {\em Proceedings of the International Conference on Artificial
  Intelligence and Statistics},  137--144.
\newblock PMLR.

\bibitem[\protect\citeauthoryear{Dembczy{\'n}ski \bgroup et al\mbox.\egroup
  }{2016}]{dembczynski2016consistency}
Dembczy{\'n}ski, K.; Kot{\l}owski, W.; Waegeman, W.; Busa-Fekete, R.; and
  H{\"u}llermeier, E.
\newblock 2016.
\newblock Consistency of probabilistic classifier trees.
\newblock In {\em Proceedings of the Joint European Conference on Machine
  Learning and Principles and Practice of Knowledge Discovery in Databases},
  511--526.
\newblock Springer.

\bibitem[\protect\citeauthoryear{Deng \bgroup et al\mbox.\egroup
  }{2009}]{deng2009imagenet}
Deng, J.; Dong, W.; Socher, R.; Li, L.-J.; Li, K.; and Fei-Fei, L.
\newblock 2009.
\newblock Imagenet: A large-scale hierarchical image database.
\newblock In {\em Proceedings of the Conference on Computer Vision and Pattern
  Recognition},  248--255.
\newblock IEEE.

\bibitem[\protect\citeauthoryear{Dietterich and
  Bakiri}{1995}]{dietterich1995solving}
Dietterich, T.~G., and Bakiri, G.
\newblock 1995.
\newblock Solving multiclass learning problems via error-correcting output
  codes.
\newblock {\em Journal of Artificial Intelligence Research} 2:263--286.

\bibitem[\protect\citeauthoryear{Dong, Frank, and
  Kramer}{2005}]{dong2005ensembles}
Dong, L.; Frank, E.; and Kramer, S.
\newblock 2005.
\newblock Ensembles of balanced nested dichotomies for multi-class problems.
\newblock In {\em Proceedings of the European Conference on Principles and
  Practice of Knowledge Discovery in Databases}. Springer.
\newblock  84--95.

\bibitem[\protect\citeauthoryear{Fox}{1997}]{fox1997applied}
Fox, J.
\newblock 1997.
\newblock {\em Applied Regression Analysis, Linear Models, and Related
  Methods}.
\newblock Sage.

\bibitem[\protect\citeauthoryear{Frank and Kramer}{2004}]{frank2004ensembles}
Frank, E., and Kramer, S.
\newblock 2004.
\newblock Ensembles of nested dichotomies for multi-class problems.
\newblock In {\em Proceedings of the International Conference on Machine
  Learning},  39--46.
\newblock ACM.

\bibitem[\protect\citeauthoryear{Freund and Schapire}{1996}]{freund1996game}
Freund, Y., and Schapire, R.~E.
\newblock 1996.
\newblock Game theory, on-line prediction and boosting.
\newblock In {\em Proceedings of the Conference on Computational Learning
  Theory},  325--332.

\bibitem[\protect\citeauthoryear{Friedman}{1996}]{friedman1996another}
Friedman, J.~H.
\newblock 1996.
\newblock Another approach to polychotomous classification.
\newblock {\em Technical Report, Statistics Department, Stanford University}.

\bibitem[\protect\citeauthoryear{Guo \bgroup et al\mbox.\egroup
  }{2017}]{guo2017calibration}
Guo, C.; Pleiss, G.; Sun, Y.; and Weinberger, K.~Q.
\newblock 2017.
\newblock On calibration of modern neural networks.
\newblock In {\em Proceedings of the International Conference on Machine
  Learning},  1321--1330.
\newblock PMLR.

\bibitem[\protect\citeauthoryear{Jiang \bgroup et al\mbox.\egroup
  }{2011}]{jiang2011smooth}
Jiang, X.; Osl, M.; Kim, J.; and Ohno-Machado, L.
\newblock 2011.
\newblock Smooth isotonic regression: A new method to calibrate predictive
  models.
\newblock {\em AMIA Summits on Translational Science Proceedings} 2011:16.

\bibitem[\protect\citeauthoryear{Kumar \bgroup et al\mbox.\egroup
  }{2013}]{kumar2013beam}
Kumar, A.; Vembu, S.; Menon, A.~K.; and Elkan, C.
\newblock 2013.
\newblock Beam search algorithms for multilabel learning.
\newblock {\em Machine Learning} 92(1):65--89.

\bibitem[\protect\citeauthoryear{Leathart \bgroup et al\mbox.\egroup
  }{2017}]{leathart2017probability}
Leathart, T.; Frank, E.; Pfahringer, B.; and Holmes, G.
\newblock 2017.
\newblock Probability calibration trees.
\newblock In {\em Proceedings of the Asian Conference on Machine Learning},
  145--160.
\newblock PMLR.

\bibitem[\protect\citeauthoryear{Leathart \bgroup et al\mbox.\egroup
  }{2018}]{leathart2018ensembles}
Leathart, T.; Frank, E.; Pfahringer, B.; and Holmes, G.
\newblock 2018.
\newblock Ensembles of nested dichotomies with multiple subset selection.
\newblock {\em arXiv preprint arxiv:1809.02740}.

\bibitem[\protect\citeauthoryear{Leathart, Pfahringer, and
  Frank}{2016}]{leathart2016building}
Leathart, T.; Pfahringer, B.; and Frank, E.
\newblock 2016.
\newblock Building ensembles of adaptive nested dichotomies with random-pair
  selection.
\newblock In {\em Proceedings of the Joint European Conference on Machine
  Learning and Principles and Practice of Knowledge Discovery in Databases},
  179--194.
\newblock Springer.

\bibitem[\protect\citeauthoryear{Lichman}{2013}]{lichman2013uci}
Lichman, M.
\newblock 2013.
\newblock {UCI} machine learning repository.

\bibitem[\protect\citeauthoryear{Mah{\'e} \bgroup et al\mbox.\egroup
  }{2014}]{mahe2014automatic}
Mah{\'e}, P.; Arsac, M.; Chatellier, S.; Monnin, V.; Perrot, N.; Mailler, S.;
  Girard, V.; Ramjeet, M.; Surre, J.; Lacroix, B.; et~al.
\newblock 2014.
\newblock Automatic identification of mixed bacterial species fingerprints in a
  maldi-tof mass-spectrum.
\newblock {\em Bioinformatics} 30(9):1280--1286.

\bibitem[\protect\citeauthoryear{Melnikov and
  H{\"u}llermeier}{2018}]{melnikov2018effectiveness}
Melnikov, V., and H{\"u}llermeier, E.
\newblock 2018.
\newblock On the effectiveness of heuristics for learning nested dichotomies:
  an empirical analysis.
\newblock {\em Machine Learning} 107(8-10):1--24.

\bibitem[\protect\citeauthoryear{Mena \bgroup et al\mbox.\egroup
  }{2015}]{mena2015using}
Mena, D.; Monta{\~n}{\'e}s, E.; Quevedo, J.~R.; and Del~Coz, J.~J.
\newblock 2015.
\newblock Using {A}* for inference in probabilistic classifier chains.
\newblock In {\em Proceedings of the International Joint Conference on
  Artificial Intelligence}.

\bibitem[\protect\citeauthoryear{Naeini, Cooper, and
  Hauskrecht}{2015}]{naeini2015obtaining}
Naeini, M.; Cooper, G.; and Hauskrecht, M.
\newblock 2015.
\newblock Obtaining well calibrated probabilities using bayesian binning.
\newblock In {\em Proceedings of the AAAI Conference on Artificial
  Intelligence},  2901--2907.

\bibitem[\protect\citeauthoryear{Niculescu-Mizil and
  Caruana}{2005}]{niculescu2005predicting}
Niculescu-Mizil, A., and Caruana, R.
\newblock 2005.
\newblock Predicting good probabilities with supervised learning.
\newblock In {\em Proceedings of the International Conference on Machine
  Learning},  625--632.
\newblock ACM.

\bibitem[\protect\citeauthoryear{Pedregosa \bgroup et al\mbox.\egroup
  }{2011}]{pedregosa2011scikit}
Pedregosa, F.; Varoquaux, G.; Gramfort, A.; Michel, V.; Thirion, B.; Grisel,
  O.; Blondel, M.; Prettenhofer, P.; Weiss, R.; Dubourg, V.; et~al.
\newblock 2011.
\newblock Scikit-learn: Machine learning in python.
\newblock {\em Journal of Machine Learning Research} 12(Oct):2825--2830.

\bibitem[\protect\citeauthoryear{Platt}{1999}]{platt1999probabilistic}
Platt, J.
\newblock 1999.
\newblock Probabilistic outputs for support vector machines and comparisons to
  regularized likelihood methods.
\newblock {\em Advances in Large Margin Classifiers} 10(3):61--74.

\bibitem[\protect\citeauthoryear{Rifkin and Klautau}{2004}]{rifkin2004defense}
Rifkin, R., and Klautau, A.
\newblock 2004.
\newblock In defense of one-vs-all classification.
\newblock {\em Journal of Machine Learning Research} 5:101--141.

\bibitem[\protect\citeauthoryear{Rodr{\'\i}guez, Garc{\'\i}a-Osorio, and
  Maudes}{2010}]{rodriguez2010forests}
Rodr{\'\i}guez, J.~J.; Garc{\'\i}a-Osorio, C.; and Maudes, J.
\newblock 2010.
\newblock Forests of nested dichotomies.
\newblock {\em Pattern Recognition Letters} 31(2):125--132.

\bibitem[\protect\citeauthoryear{Russakovsky \bgroup et al\mbox.\egroup
  }{2015}]{ILSVRC15}
Russakovsky, O.; Deng, J.; Su, H.; Krause, J.; Satheesh, S.; Ma, S.; Huang, Z.;
  Karpathy, A.; Khosla, A.; Bernstein, M.; Berg, A.~C.; and Fei-Fei, L.
\newblock 2015.
\newblock {ImageNet Large Scale Visual Recognition Challenge}.
\newblock {\em International Journal of Computer Vision (IJCV)}
  115(3):211--252.

\bibitem[\protect\citeauthoryear{Wever, Mohr, and
  H{\"u}llermeier}{2018}]{wever2018ensembles}
Wever, M.; Mohr, F.; and H{\"u}llermeier, E.
\newblock 2018.
\newblock Ensembles of evolved nested dichotomies for classification.
\newblock In {\em Proceedings of the Genetic and Evolutionary Computation
  Conference},  561--568.
\newblock ACM.

\bibitem[\protect\citeauthoryear{Zadrozny and
  Elkan}{2001}]{zadrozny2001obtaining}
Zadrozny, B., and Elkan, C.
\newblock 2001.
\newblock Obtaining calibrated probability estimates from decision trees and
  naive {B}ayesian classifiers.
\newblock In {\em Proceedings of the International Conference on Machine
  Learning},  609--616.
\newblock ACM.

\bibitem[\protect\citeauthoryear{Zadrozny and
  Elkan}{2002}]{zadrozny2002transforming}
Zadrozny, B., and Elkan, C.
\newblock 2002.
\newblock Transforming classifier scores into accurate multiclass probability
  estimates.
\newblock In {\em Proceedings of the ACM SIGKDD International Conference on
  Knowledge Discovery and Data Mining},  694--699.
\newblock ACM.

\bibitem[\protect\citeauthoryear{Zhong and Kwok}{2013}]{zhong2013accurate}
Zhong, W., and Kwok, J.~T.
\newblock 2013.
\newblock Accurate probability calibration for multiple classifiers.
\newblock In {\em Proceedings of the International Joint Conference on
  Artificial Intelligence},  1939--1945.

\end{thebibliography}
\bibliographystyle{aaai}

\end{document}